\def\eqref#1{equation~\ref{#1}}
\def\1{\bm{1}}
\DeclareMathAlphabet{\mathsfit}{\encodingdefault}{\sfdefault}{m}{sl}
\SetMathAlphabet{\mathsfit}{bold}{\encodingdefault}{\sfdefault}{bx}{n}
\newcommand{\E}{\mathbb{E}}
\DeclareMathOperator*{\argmax}{arg\,max}
\newtheorem{theorem}{Theorem}
\title{Feasible Adversarial Robust Reinforcement Learning
for Underspecified Environments}
\author{JB Lanier\textsuperscript{1}\quad Stephen McAleer\textsuperscript{2}\quad Pierre Baldi\textsuperscript{1}\quad Roy Fox\textsuperscript{1}\\
\textsuperscript{1}Department of Computer Science, UC Irvine\\
\textsuperscript{2}School of Computer Science, Carnegie Mellon University\\
\texttt{\{jblanier,royf\}@uci.edu}, \texttt{smcaleer@cs.cmu.edu}, \texttt{pfbaldi@ics.uci.edu} \\
}
\begin{document}

\maketitle

\begin{abstract}
Robust reinforcement learning (RL) considers the problem of learning policies that perform well in the worst case among a set of possible environment parameter values. In real-world environments, choosing the set of possible values for robust RL can be a difficult task. When that set is specified too narrowly, the agent will be left vulnerable to reasonable parameter values unaccounted for. When specified too broadly, the agent will be too cautious. In this paper, we propose Feasible Adversarial Robust RL (FARR), a novel problem formulation and objective for automatically determining the set of environment parameter values over which to be robust. FARR implicitly defines the set of feasible parameter values as those on which an agent could achieve a benchmark reward given enough training resources. By formulating this problem as a two-player zero-sum game, optimizing the FARR objective jointly produces an adversarial distribution over parameter values with feasible support and a policy robust over this feasible parameter set. We demonstrate that approximate Nash equilibria for this objective can be found using a variation of the PSRO algorithm. Furthermore, we show that an optimal agent trained with FARR is more robust to feasible adversarial parameter selection than with existing minimax, domain-randomization, and regret objectives in a parameterized gridworld and three MuJoCo control environments.

\end{abstract}

\section{Introduction}
Recent advancements in deep reinforcement learning (RL) show promise for the field's applicability to control in real-world environments by training in simulation \citep{openai2018learning, hu2021sim, li2021reinforcement}. In such deployment scenarios, details of the test-time environment layout and dynamics can differ from what may be experienced at training time. It is important to account for these potential variations to achieve sufficient generalization and test-time performance. 




Robust RL methods train on an adversarial distribution of difficult environment variations to attempt to maximize worst-case performance at test-time. This process can be formulated as a two-player zero-sum game between the primary learning agent, the \emph{protagonist}, which is a maximizer of its environment reward, and a second agent, the \emph{adversary}, which alters and affects the environment to minimize the protagonist's reward \citep{pinto2017robust}. By finding a Nash equilibrium in this game, the protagonist maximizes its worst-case performance over the set of realizable environment variations.



This work aims to address the growing challenge of specifying the robust adversarial RL formulation in complex domains. On the one hand, the protagonist's worst-case performance guarantee only applies to the set of environment variations realizable by the adversary. It is therefore desirable to allow an adversary to represent a large uncertainty set of environment variations. On the other hand, care has to be taken to prevent the adversary from providing unrealistically difficult conditions. If an adversary can pose an insurmountable problem in the protagonist's training curriculum, under a standard robust RL objective, it will learn to do so, and the protagonist will exhibit overly cautious behavior at test-time \citep{ma2018improved}. As the complexity of environment variations representable in simulation increases, the logistic difficulty of well-specifying the limits of the adversary's abilities is exacerbated.





For example, consider an RL agent for a warehouse robot trained to accomplish object manipulation and retrieval tasks in simulation. It is conceivable that the developer cannot accurately specify the distribution of environment layouts, events, and object properties that the robot can expect to encounter after deployment. 
A robust RL approach may be well suited for this scenario to, during training, highlight difficult and edge-case variations. 
However, given the large and complex uncertainty set of environment variations, it is likely that an adversary would be able to find and over-represent unrealistically difficult conditions such as unreachable item locations or a critical hallway perpetually blocked by traffic. We likely have no need to maximize our performance lower bound on tasks harder than those we believe will be seen in deployment. However, with an increasingly complex simulation, it may become impractical to hand-design rules to define which variation is and is not too difficult.

To avoid the challenge of precisely tuning the uncertainty set of variations that the adversary can and cannot specify, we consider a new modified robust RL problem setting. In this setting, we define an environment variation provided by an adversary as \emph{feasible} if there exists any policy that can achieve at least $\lambda$ return (i.e. policy success score) under it and \emph{infeasible} otherwise. The parameter $\lambda$ describes the level of worse-case test-time return for which we wish our agent to train.
Given an underspecified environment, i.e. an environment parameterized by an uncertainty set which can include unrealistically difficult, infeasible conditions, our goal is to find a protagonist robust only to the space of feasible environment variations.




With this new problem setting, we propose Feasible Adversarial Robust Reinforcement learning (FARR), in which a protagonist is trained to be robust to the space of all feasible environment variations by applying a reward penalty to the adversary when it provides infeasible conditions. This penalty is incorporated into a new objective, which we formulate as a two-player zero-sum game. Notably, this formulation does not require a priori knowledge of which variations are feasible or infeasible. 

We compare a near-optimal solution for FARR against that of a standard robust RL minimax game formulation, domain randomization, and an adversarial regret objective similarly designed to avoid unsolvable tasks \citep{dennis2020emergent, parker2022evolving}. For the two-player zero-sum game objectives of FARR, minimax, and regret, we approximate Nash equilibria using a variation of the Policy Space Response Oracles (PSRO) algorithm \citep{psro}. We evalaute in a gridworld and three MuJoCo environments. Given underspecified environments where infeasible variations can be selected by the adversary, we demonstrate that FARR produces a protagonist policy more robust to the set of feasible task variations than existing robust RL minimax, domain-randomization, and regret-based objectives. To summarize, the primary contributions of this work are:
\begin{itemize}
  \item We introduce FARR, a novel objective designed to produce an agent that is robust to the feasible set of tasks implicitly defined by a threshold on achievable reward.
  \item We show that this FARR objective can be effectively optimized using a variation of the PSRO algorithm.
  
  \item We empirically validate that a near-optimal solution to the FARR objective results in higher worst-case reward among feasible tasks than solutions for other objectives designed for similar purposes: standard robust RL minimax, domain randomization, and regret, in a parameterized gridworld and three MuJoCo environments.
  
\end{itemize}



\section{Related Work}

\subsection{Domain Randomization}
Domain randomization methods train in simulation on a distribution of environment variations that is believed to generalize to the real environment. The choice of a distribution for training-time simulation parameters plays a major role in the final test performance of deployed agents \citep{vuong2019pick}. While domain randomization has been used with much success in sim-to-real settings \citep{openai2018learning, tobin2017domain, hu2021sim, li2021reinforcement}, its objective is typically the average-case return over the simulation uncertainty set rather than the worst-case. This can be desirable in applications where average training-time performance is known to generalize to the test environment or where this can be validated. However, average-case optimization can be unacceptable in many real-world scenarios where safety, mission-criticality, or regulation require the agent to perform well in test environments whose weight in the average would be small and where sufficient validation is unavailable. 

\subsection{Robust Reinforcement Learning}
Robust reinforcement learning methods optimize reward in the worst-case and provide a promising approach for sim-to-real. Robust Adversarial Reinforcement Learning (RARL) \citep{pinto2017robust}, which this work builds upon, optimizes return under the worst-case environment variations by optimizing a two-player zero-sum game between a task-performing protagonist and an environment-controlling adversary. Numerous mechanisms by which the adversary affects the environment have been explored including perturbing forces and varying environment dynamics \citep{pinto2017robust, mandlekar2017adversarially, nakao2021distributionally}, disturbances to actions \citep{tessler2019action, vinitsky2020robust, tan2020robustifying}, and attacks on agent observations \citep{pattanaik2017robust, gleave2019adversarial, zhang2020robust, zhang2021robust, kumar2021policy}. However, each of these works makes the assumption that the uncertainty set from which perturbations and variations may be sampled is well-specified and tuned such that an agent robust to a minimax selection over the entire set will perform optimally in deployment. FARR relaxes this assumption and intends to produce a robust agent to the real environment, even when the adversary can select unrealistically hard variations.


\subsection{Automatic Curriculum Design}
Similar to finding the worst-case distribution of environment variations is finding the best-case distribution for improving an existing agent. Automatic curricula seek to find environment parameters that are challenging but not too difficult for a training agent. \citet{racaniere2019automated}, \citet{campero2020learning}, and \citet{florensa2018automatic} generate useful curricula for a single learning agent to solve difficult tasks by proposing appropriately challenging goals for the agent's current abilities. POET \citep{wang2019paired, wang2020enhanced} co-evolves a population of tasks and associated agents to produce an increasingly complex set of tasks with coupled agents capable of solving their associated task. While each of these methods can create increasingly capable agents, they offer no guarantees for final agent robustness. 
 
Asymmetric self-play \citep{sukhbaatar2018intrinsic, openai2021asymmetric} facilitates two agents with similar capabilities to compete in a two-player game where one attempts to reach goals that are achievable but difficult to the other agent. PAIRED \citep{dennis2020emergent} and subsequent improvements to optimization \citep{parker2022evolving, du2022takes} extend this concept beyond goals to the generation of parameterized environments by introducing a two-player zero-sum regret objective between an adversary and a protagonist. The adversary learns to specify environment conditions in which regret is highest such that the protagonist's performance most differs from estimated optimal performance. Although the regret and FARR objectives both involve estimating optimal performance with respect to the adversary, optimizing regret does not necessarily provide the hardest possible tasks like robust RL and instead selects tasks where improvements to behavior most affect incurred reward. Regret is useful for training a broadly capable agent given no preference over tasks. Robust RL, which we are interested in extending, optimizes performance under a worst-case distribution of conditions, and FARR does the same while applying constraints on the difficulty of said conditions.

\section{Background}

\subsection{Underspecified Partially-Observable Markov Decision Processes}

We adapt Underspecified Partially-Observable Markov Decision Processes (UPOMDPs) from \citet{dennis2020emergent} where there exists a parameter of variation $\theta \in \Theta$ that is hidden from the agent. This parameter $\theta$ controls some aspect of the environment that can potentially be discovered through interaction. For example, $\theta$ could control the friction of a robot arm, the mass of a cartpole, or the layout of a room. 

We model a UPOMDP as a tuple $ \mathcal{M} = \langle A, S, O, \Theta, \mathcal{T}, \rho, \mathcal{I}, \mathcal{R}, \gamma  \rangle$ where $A$ is the set of actions, $S$ is the set of states, $O$ is the set of observations, and $\gamma$ is the discount factor. The choice of parameter $\theta \in \Theta$ controls the conditions in this environment, namely the transition distribution function $\mathcal{T}: S \times A \times \Theta \xrightarrow{} \Delta{(S)}$, the initial state distribution $\rho: \Theta \xrightarrow{} \Delta{(S)} $, the observation function $\mathcal{I}: S \times \Theta \xrightarrow{} O$, and the reward function $\mathcal{R}: S \times \Theta \xrightarrow{} \mathbb{R}$.

Given an observable history $h \in H = (O \times A)^* \times O$, a protagonist policy $\pi_p: H \to \Delta(A)$ decides at each time step $t$ on the distribution of the action $a_t$ after seeing $h = o_0, a_0, \ldots, o_t$. Jointly with a UPOMDP $\mathcal{M}$ and a specific environment parameter $\theta$, the policy induces a distribution $p^\theta_{\pi_p}$ over the states, actions, observations, and rewards in an interaction episode. We define the protagonist's utility $U_p(\pi_p, \theta) = \E_{p^\theta_{\pi_p}} \left[ {\sum_t \gamma^t r_t} \right]$ as the expected episode discounted return for a protagonist policy $\pi_p$ and choice of $\theta$.

\subsection{Policy Space Response Oracles}
Policy Space Response Oracles (PSRO) \citep{psro} is a deep RL method for calculating approximate Nash equilibria (NE) in zero-sum two-player games. It extends the normal-form Double-Oracle algorithm \citep{double_oracle} to games with sequential interaction. At a high-level, PSRO iteratively adds new policies for each player to a population until a normal-form mixed-strategy solution to the restricted game induced by selecting population policies closely approximates a NE in the full game. 

PSRO operates by maintaining a population of policies $\Pi_i$ for each player $i$ and a normal-form mixed strategy $\sigma_i \in \Delta(\Pi_i)$. This mixed strategy $\sigma_i$ represents a distribution over policies $\pi_i \in \Pi_i$ to sample from at the beginning of each episode, and upon algorithm termination, $\sigma = (\sigma_1, \sigma_2)$ is the final output of PSRO. 
The utility of player $i$ of playing a mixed strategy $\sigma_i$ against an opponent's policy $\pi_{-i}$ is therefore $U_i(\sigma_i, \pi_{-i}) = \E_{\pi_i\sim{\sigma_i}}\left[ {U_i(\pi_i, \pi_{-i})} \right]$, and likewise, $U_i(\sigma_i, \sigma_{-i}) = \E_{\pi_{i}\sim{\sigma_{i}}, \pi_{-i}\sim{\sigma_{-i}}}\left[ {U_i(\pi_i, \pi_{-i})} \right]$.

In each iteration of PSRO, new policies for each player $i$ are added to its population $\Pi_i$. In the case of sequential interaction, this is typically an RL best-response $\mathbb{BR}(\sigma_{-i}) = \argmax_{\pi_i}{U_i(\pi_i, \sigma_{-i})}$ that maximally exploits the opponent mixed-strategy. However, this choice of new policy is not a requirement, and PSRO maintains NE convergence guarantees so long as novel policies are continuously added for each player.

After adding new policies, utilities $U^\Pi(\pi_1, \pi_2)$ between each pairing of player policies $\pi_1 \in \Pi_1$ and $\pi_2 \in \Pi_2$ are empirically estimated using rollouts to create a normal-form restricted game in which population policies are the strategies. 
A new NE mixed-strategy that solves the restricted game, a \emph{restricted NE} $\sigma = (\sigma_1, \sigma_2)$, is then cheaply calculated for each player.
This process is repeated until no new policies are added to either player's population or the process is externally stopped. As the number of strategies in each player's population grows, a NE solution to the restricted game asymptotically converges to a NE solution to the full game.

While other potentially suitable methods for solving extensive-form games exist, for example NFSP \citep{nfsp} and Deep-CFR \citep {brown2019deep}, PSRO was chosen out of practicality because it can be implemented as an additional logic layer on top of existing reinforcement learning software stacks. Although PSRO-based methods are competitive in sample-efficiently solving two-player zero-sum games \citep{psro, vinyals2019grandmaster, mcaleer2021xdo, mcaleer2022anytime, mcaleer2022self, liu2022neupl}, the purpose of experiments in this work is not focused on improving the speed at which we might reach optima. Rather, we are interested in ensuring that we can reliably reach approximate NE for each objective we test by using PSRO in order to compare their near-optimal solutions on even ground.

\section{Feasible Adversarial Robust Reinforcement Learning}

\subsection{Motivation}

We begin our discussion of the FARR method with a motivating example. Consider a cartpole environment where the pole is subject to perturbing forces of an unknown magnitude. To maximize our worse-case performance, we could formulate a robust RL game in which the adversary provides the most difficult possible mixed strategy of force magnitudes. By learning a best response strategy to this adversary, the protagonist will maximize its worst-case performance at test-time when presented with an unknown distribution of forces.

Importantly, to achieve this process, we would need to allow the adversary to specify a sufficiently wide range of force magnitudes such that the real environment is believed to be in this range. 
However, we would not want to allow the adversary to specify force magnitudes so large that the task becomes impossible, because the adversary would then always select overly difficult parameters and the protagonist would only train on impossible task variations, failing at test time.

In this cartpole example, it is possible to manually adjust the range of allowed force values until the widest possible uncertainty set is found that still avoids the learning of a degenerate protagonist strategy. However, if this setting were scaled up to specifying the allowed values of many coefficients, level layouts, or environmental events where the adversary has complex, high-dimensional interactions with the protagonist, hand tuning these limits may no longer be viable.

To remove the need for human expert tuning of the adversary limits, we instead create a game where the adversary is allowed to specify impossible environment variations but is heavily penalized for doing so, using the method we describe below.

\subsection{Feasibility}
We define an environment parameter $\theta \in \Theta$ as $\lambda$-feasible if a best-response to $\theta$ can achieve an expected return of at least $\lambda \in \mathbb{R}$. Heuristically, the value for $\lambda$ could be set as the lowest average return that we would expect an agent to receive across variations in deployment if it could act optimally with respect to each variation. We define the feasible set $\mathcal{F}^\lambda$ of environment parameters as: 

\begin{equation}
\label{feasibility}
    \mathcal{F^\lambda} = \{ \theta \in \Theta | U_p(\mathbb{BR}(\theta), \theta)\ge\lambda \}.
\end{equation}

$\mathcal{F}^\lambda$ matches our motivation for considering feasible parameters when either of two conditions is satisfied. First, we may have a lower bound on the achievable performance in the real environment, and we can set $\lambda$ at or below that bound. If the real environment is feasible, $\theta^* \in \mathcal{F}^\lambda$, then we are justified in avoiding training the protagonist on infeasible environments. Second, there may be a performance threshold such that only above it we have a preference over agent behaviors. For example, we may only care where a warehouse robot navigates if it has a valid path to its target shelf.

Unfortunately, the set of feasible variations $\mathcal{F}^\lambda$ is generally unknown. As part of optimizing the FARR objective, the adversary will learn an approximation of $\mathcal{F}^\lambda$ and use it to guide the selection of feasible environment variations. We note that, in order to evaluate our approach in this paper, we focus on environments where we can, in fact, calculate $\mathcal{F}^\lambda$. It is reasonable to expect our findings to carry over to some domains where $\mathcal{F}^\lambda$ is truly unknown and where the method cannot be directly evaluated.





\subsection{FARR Objective}

We wish to optimize the standard zero-sum robust adversarial game through PSRO with the additional constraint that the support of the adversary mixed strategy $\sigma_\theta$ contains only strategies in the feasible set $\mathcal{F}^\lambda$. Define $\texttt{supp} (\sigma_\theta) = \{\theta \in \Theta | \sigma_\theta(\theta) > 0\}$. Our intended FARR objective is to solve the game:

\begin{equation}
\label{farl_ideal_objective}
    \min_{\sigma_\theta} \max_{\sigma_p}  U_p(\sigma_p, \sigma_\theta) \quad \text{subject to}  \quad
    \texttt{supp} (\sigma_\theta) \subseteq \mathcal{F}^\lambda.
\end{equation}

An adversary mixed strategy $\sigma_\theta$ with support that is a subset of $\mathcal{F}^\lambda$ will only provide feasible environment variations to the protagonist.



In order to provide flexibility in how novel adversary strategies that satisfy this constraint could be optimized, 
we can replace the hard constraint with a sufficiently large penalty $C$ to the adversary when it violates the constraint. An equivalent zero-sum FARR objective would then be to optimize:










\begin{equation} 
\label{farr_objective}
    \min_{\sigma_\theta}\max_{\sigma_p}U_p^{\lambda}(\sigma_p, \sigma_\theta),
\end{equation}
where the FARR utility function $U_p^{\lambda}(\pi_p, \theta)$ is unchanged from the original game 
if the adversary's provided environment parameter is feasible, and where otherwise a large constant adversary penalty $C$ is applied:
\begin{equation}
\label{farr_utility}
        U_p^{\lambda}(\pi_p, \theta) = \begin{cases}
    C & \text{if } U_p(\mathbb{BR}(\theta), \theta) < \lambda \\
    U_p(\pi_p, \theta)              & \text{otherwise.}
\end{cases}
\end{equation}

\begin{figure*}[]




    \subfigure[]{\includegraphics[scale=0.075]{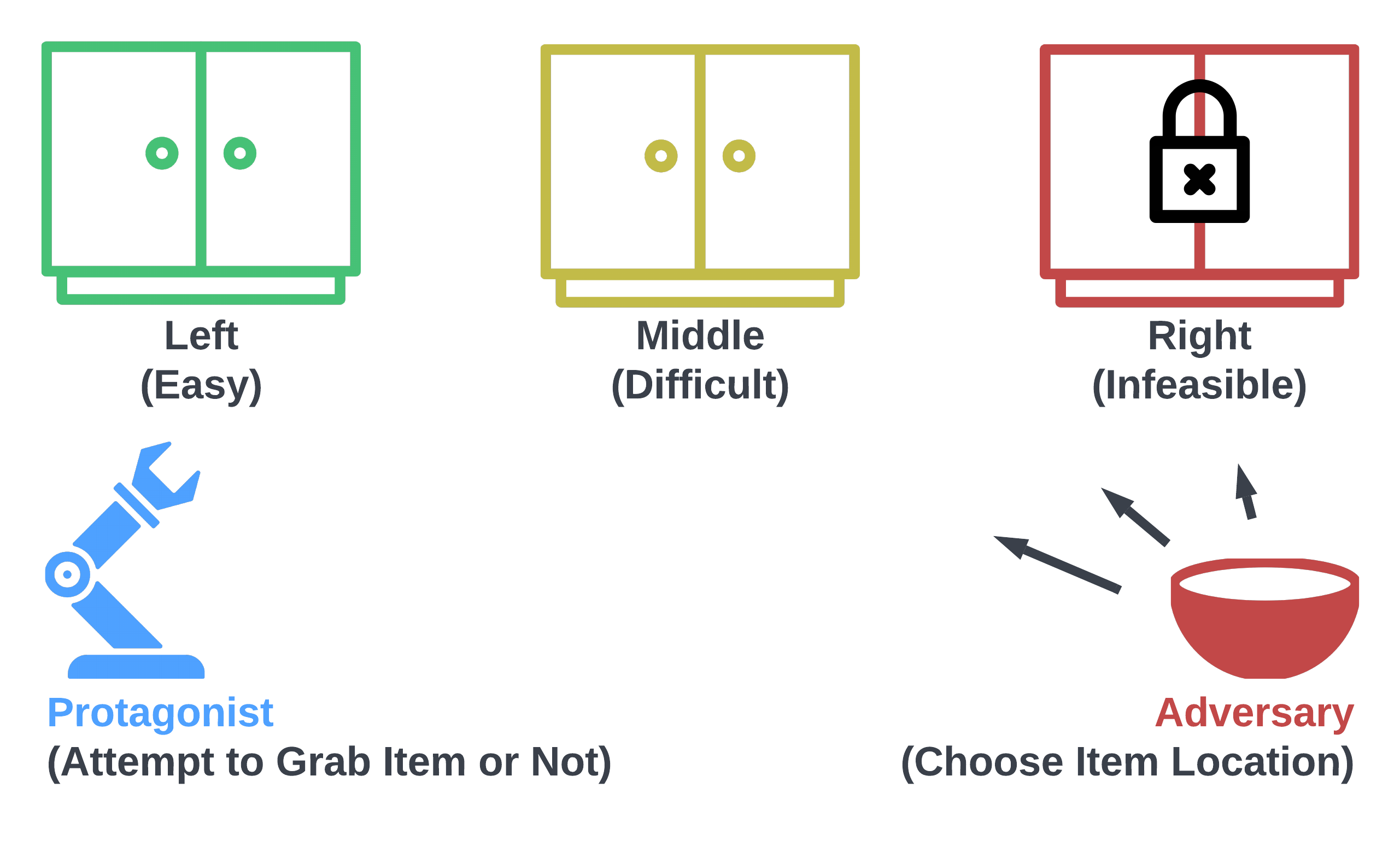}}
    \hfill
    \subfigure[]{\includegraphics[scale=0.315]{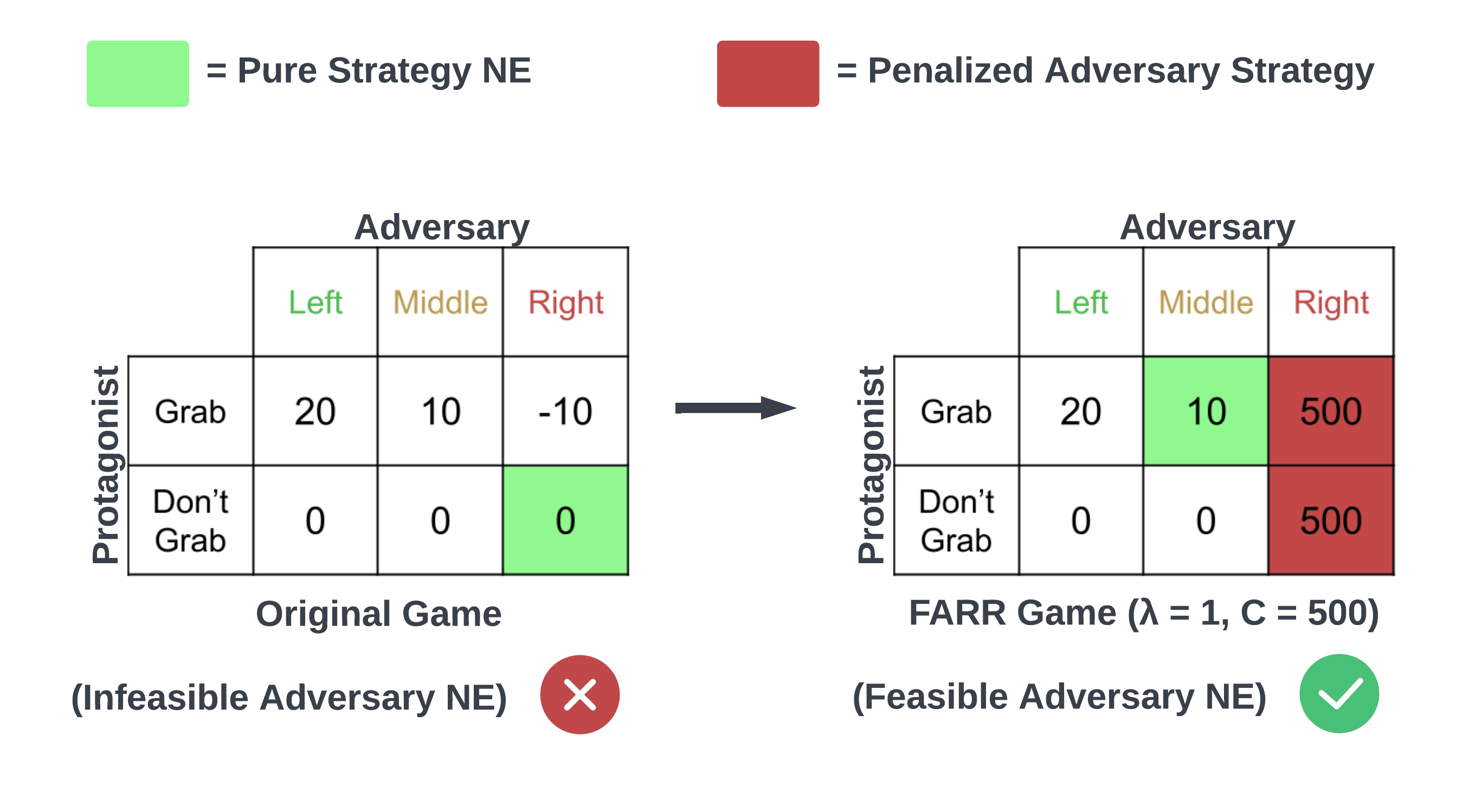}}
    \caption{(a) An adversarial item retrieval game. The adversary can hide a bowl in a left, middle, or locked right cabinet. The protagonist chooses whether or not to attempt to find and grab the bowl. The protagonist receives a penalty for attempting to grab a bowl placed in the locked right cabinet and failing.
    (b) Matrix game representation of the original game and FARR transformed game. Protagonist utilities are shown. NE for the original game place weight on infeasible tasks with suboptimal behavior from the protagonist. NE for the FARR game provide feasible but difficult tasks and optimal protagonist behavior supposing feasible tasks are expected in deployment.
    }
    \label{fig:matrixgame}
\end{figure*}

\subsection{Matrix Game Example}

We demonstrate the effect of the FARR utility function (equation \ref{farr_utility}) through a matrix game example shown in Figure \ref{fig:matrixgame}. The adversary specifies the location of a bowl among three cabinets, where the middle cabinet is more difficult for the protagonist to access than the left, and the right cabinet is locked and inaccessible. Without observing the bowl's location, the protagonist must choose whether or not to attempt to find and grab the bowl, receiving a penalty for a failed retrieval attempt. In deployment, we expect that the item will always be placed in a feasible, accessible cabinet, either the left or the middle, so ideal behavior for the protagonist would be to always attempt to grab the bowl. For this game, we define the feasibility threshold over task variations as $\lambda = 1$.

The original game, shown in matrix-form in Figure \ref{fig:matrixgame}(b), contains an infeasible adversary pure strategy (bowl in locked right cabinet). Because of the presence of this infeasible adversary strategy, the protagonist will never attempt to grab the bowl in any of the game's Nash equilibria (one pure strategy NE is displayed in green). If we now replace the original game's utility function $U_p$ with the FARR utility function $U_p^\lambda$ using $C = 500$, the adversary receives a penalty of $-500$ for placing the bowl in the infeasible locked right cabinet because there exists no protagonist strategy that can achieve a utility of a least $\lambda = 1$ under that environment variation. 
Instead, in the FARR transformed game, a NE adversary places the bowl in the feasible but difficult middle cabinet, and the protagonist learns, facing a selection of feasible tasks, to always attempt to retrieve the bowl. This new protagonist NE strategy for the FARR game is optimal with respect to anticipated feasible deployment conditions.

\newpage

\subsection{Optimizing with PSRO}

\begin{algorithm}[]
  \caption{FARR Optimized through PSRO}
  \label{alg:farr_psro}
\begin{algorithmic}
  \STATE {\bfseries Input:} Initial policy sets $\Pi = (\Pi_p, \Pi_\theta)$ for Protagonist player and Adversary player
  \STATE Compute expected FARR payoff matrix $U_\lambda^\Pi$ as utilities $U_p^\lambda(\pi_p, \theta)$ for each joint $(\pi_p, \theta) \in \Pi$ \;
  \REPEAT
  \STATE Compute Normal-Form restricted NE $\sigma = (\sigma_p, \sigma_\theta)$ over population policies $\Pi$ using $U_\lambda^\Pi$\;

  \STATE Calculate new Protagonist policy $\pi_p$ (e.g. $\mathbb{BR}(\sigma_\theta)$)
  \STATE $\Pi_p = \Pi_p \cup \{ \pi_p \}$ \; 

  \FOR{at least one iteration}{}
    \STATE Calculate new Adversary strategy $\theta$ and associated estimator for $\mathbb{BR}(\theta)$ \; 
     \STATE $\Pi_\theta = \Pi_\theta \cup \{ \theta \}$ \; 

    \ENDFOR
  \STATE Compute missing entries in $U_\lambda^\Pi$ from $\Pi$ \;
   
  \UNTIL{terminated early or no novel policies can be added}

  \STATE {\bfseries Output:} current Protagonist restricted NE strategy $\sigma_p$ \;

\end{algorithmic}
\end{algorithm}

We use PSRO to solve for an approximate NE of the FARR transformed game with the penalty-based objective defined in equation (\ref{farr_objective}). Our optimization process is shown in algorithm \ref{alg:farr_psro}. We represent environment parameters $\theta$ as strategies in the adversary population $\Pi_\theta$ with an output mixed-strategy $\sigma_\theta$ over $\Pi_\theta$. Similarly, we represent protagonist RL agent policies as strategies $\pi_p \in \Pi_p$ with an output mixed-strategy $\sigma_p$ over $\Pi_p$.

In each PSRO iteration, to best-respond to the current adversary restricted NE $\sigma_\theta$, a new protagonist policy is trained using RL with a fixed environment experience budget. One or more random novel adversary pure strategies are added in each PSRO iteration. For wall-time parallelization, we add 3 in each iteration. For each adversary strategy $\theta$, we also train an evaluator RL policy $\pi_e^\theta$ with the same hyperpameters as the protagonist to estimate $\mathbb{BR}(\theta)$ and feasibility for the FARR utility function $U_p^\lambda$. 

\section{Experiments}

To illustrate the utility of filtering out infeasible tasks in a robust RL setting, we perform experiments in environments where overly difficult or unsolvable task variations are possible while measuring performance under feasible conditions.
In a goal-based gridworld environment and three perturbed MuJoCo \citep{todorov2012mujoco} control environments, we compare the performance of FARR with three alternative objectives: a standard minimax robust adversarial RL objective, domain randomization, and the regret objective as proposed in \citet{dennis2020emergent}. Given a threshold for feasibility $\lambda$, we evaluate worst-case performance within the set $\mathcal{F}^\lambda$ of feasible environment parameters. FARR outperforms each of these objectives because it is able to provide an adversarial training distribution of environment variations limited only to instances that are discovered to be feasible, while other methods provide overly difficult or otherwise mismatched training distributions for worst-case feasible conditions.

We optimize FARR and other baseline objectives with PSRO and identical protagonist RL best-response algorithms in order to compare final performance given guarantees of asymptotically reaching an approximate NE for each zero-sum objective.

We describe each of the baseline objectives below:

\begin{description}
\item[Minimax] The standard objective for robust adversarial RL using 
unmodified $U_p(\sigma_p, \sigma_\theta)$. When infeasible tasks are allowed to the adversary, the standard minimax robust RL objective will focus on such tasks, resulting in overly cautious protagonist behavior or failed learning. 

\item[Domain Randomization (DR)] With domain randomization, we train a single protagonist policy $\pi_p^{DR} = \mathbb{BR}(\sigma_\theta^{DR})$ to saturation against a uniform mixture of all possible environment variations $\sigma_\theta^{DR} = \mathcal{U}(\Theta)$. Domain randomization can result in an exploitable agent if the relevant feasible part of configuration space is underrepresented by the uniform measure. This objective is optimized by training a single RL policy rather than with PSRO.

\newpage

\item[Regret] Matching the objective used by PAIRED \citep{dennis2020emergent}, we use PSRO to approximately solve for NE using the objective $\min_{\sigma_\theta} \max_{\sigma_p} \E_{\theta\sim{\sigma_\theta}}\left[{U_p(\sigma_p, \theta) - U_p(\mathbb{BR}(\theta), \theta)}\right]$. $\mathbb{BR}(\theta)$ is estimated using the same method as with FARR by training an evaluation agent $\pi_e^\theta$ against each $\theta$. While designed to provide a distribution of tasks where the protagonist is known to be able to positively affect its performance through optimal behavior, this curriculum learning objective does not generally provide a task distribution suitable for robust learning to a specific set of tasks such as $\mathcal{F}^\lambda$.
\end{description}

To calculate the worst-case episode reward within the set $\mathcal{F}^\lambda$ of feasible environment parameters, we use our knowledge of $\mathcal{F}^\lambda$ to enumerate a comprehensive set of feasible tasks on which every baseline's performance is measured. For the gridworld environment, the entire feasible set $\mathcal{F}^\lambda$ is calculated analytically. For MuJoCo, a discretization of the continuous parameter space $\Theta$ is calculated. Feasibility for each $\theta$ in the discretized space is then measured by training an RL best-response $\mathbb{BR}(\theta)$ to completion and averaging final utility $U_p(\mathbb{BR}(\theta), \theta)$ over 7 seeds. $\mathcal{F}^\lambda$ is then determined using equation (\ref{feasibility}). We measure feasible worst-case episode reward as $\min_{\theta \in \mathcal{F}^\lambda}U_p(\sigma_p, \theta)$, averaging over 100 episodes for each value of $\theta$. We use this as our metric for robustness to the feasible set $\mathcal{F}^\lambda$.




\subsection{Lava World}

\begin{figure*}[]
    \centering
    \subfigure[]{\includegraphics[width=0.19\textwidth]{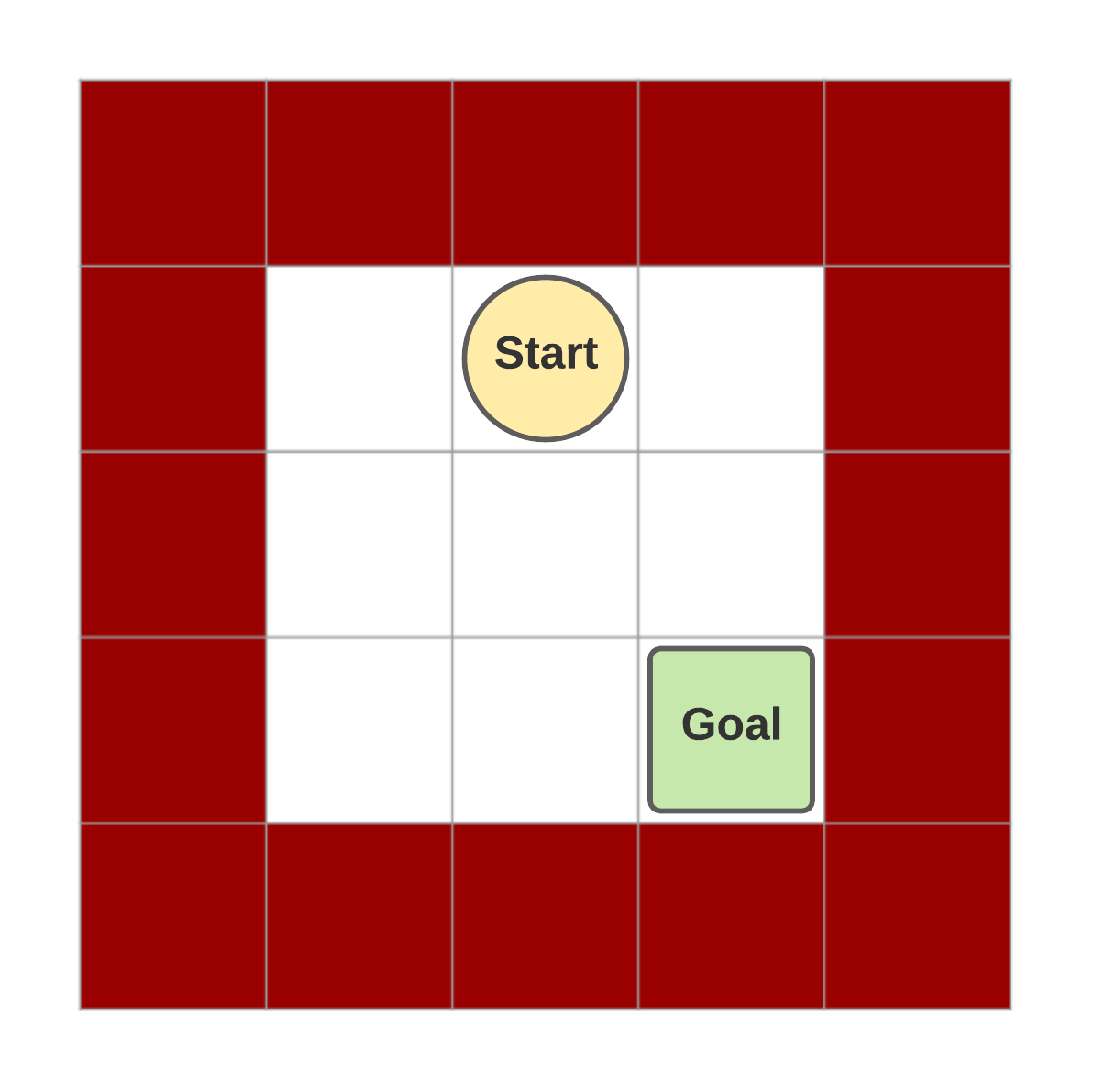}} 
    \subfigure[]{\includegraphics[width=0.19\textwidth]{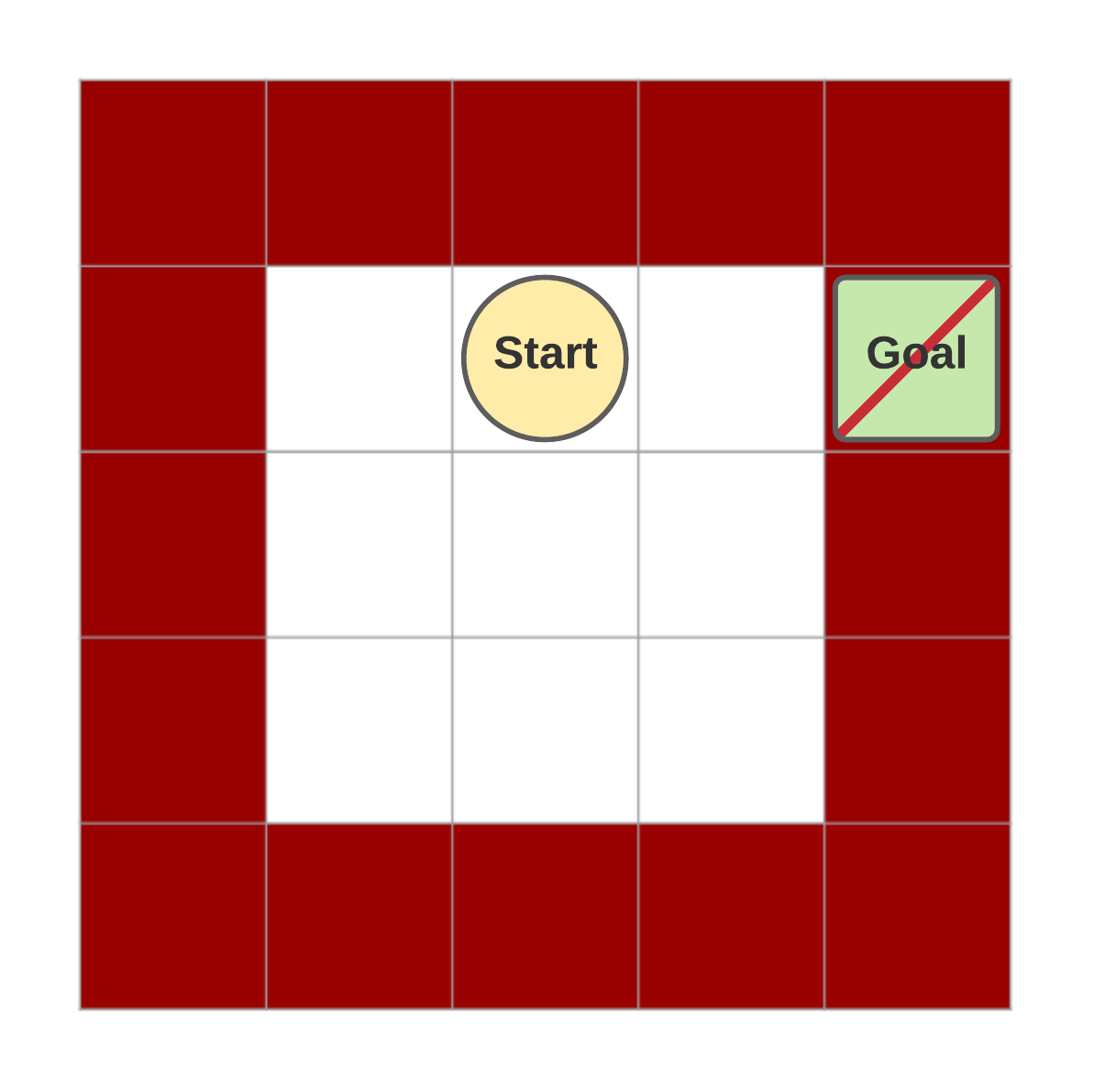}} 
    \quad
    \subfigure[]{\includegraphics[width=0.32\textwidth]{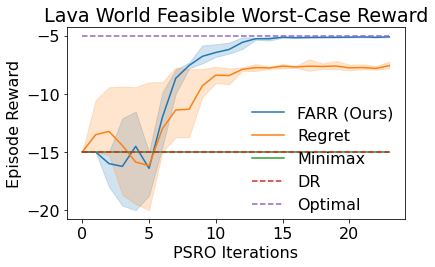}}
    \caption{(a) The Lava World grid environment. The adversary specifies the location of an unobservable goal. The protagonist receives -1 reward each timestep until it reaches the hidden goal. If the protagonist steps in lava (red), the episode ends and it receives a penalty of -15 reward. (b) The environment is underspecified and the goal can be placed in lava, forcing the agent to receive the lava penalty and creating an infeasible task given $\lambda=-10$. (c) Solving for approximate NE using PSRO, the FARR objective results in an agent maximally robust  to the feasible, non-lava goals while other objectives result in suboptimal worst-case performance among the feasible set of tasks.}
    \label{fig:lavaworld}
\end{figure*}

    
    

The gridworld task “Lava World” consists of a small platform surrounded by lava, as depicted in Figure \ref{fig:lavaworld} (a). The adversary specifies a goal location $\theta$ that the protagonist needs to reach, however the protagonist does not observe the goal, and the goal can be placed in lava. With an episode horizon of 20, the protagonist receives a reward of -1 for every timestep that it does not reach the goal. If the protagonist moves into lava, the episode ends, and it receives a reward of -15 even if the goal is at that location. The feasibility threshold for this task is $\lambda = -10$, thus making a parameter for this environment infeasible if the goal is put in lava and feasible otherwise. We use DDQN \citep{van2016deep} to train protagonist RL policies.

Worst-case protagonist episode reward among all values in the feasible set $\theta \in \mathcal{F}^\lambda$ is shown as a function of PSRO iterations for FARR and baseline objectives in Figure \ref{fig:lavaworld} (c). The minimax objective fails because the adversary learns to always suggest infeasible lava goals, and the protagonist learns to immediately jump in lava rather than waste time searching for a goal in non-lava cells. Likewise, domain-randomization fails because the majority of goals are infeasible lava goals, so in order to optimize the average-case, the protagonist learns the same suboptimal behavior as it does with minimax. The regret objective produces a distribution of both feasible and infeasible goals where non-lava goals are the majority, however this distribution is not an adversarial robust NE with respect to the set of feasible goals, so worst-case performance with the regret objective is still not optimal. FARR penalizes the adversary for suggesting the infeasible lava goals and otherwise provides base robust adversarial RL utilities for feasible goals, thus resulting in a protagonist that maximizes worst-case reward among the actual feasible non-lava goal set $\mathcal{F}^\lambda$.

\begin{figure*}[h]
    \centering
        \subfigure[]{\includegraphics[width=0.95\textwidth]{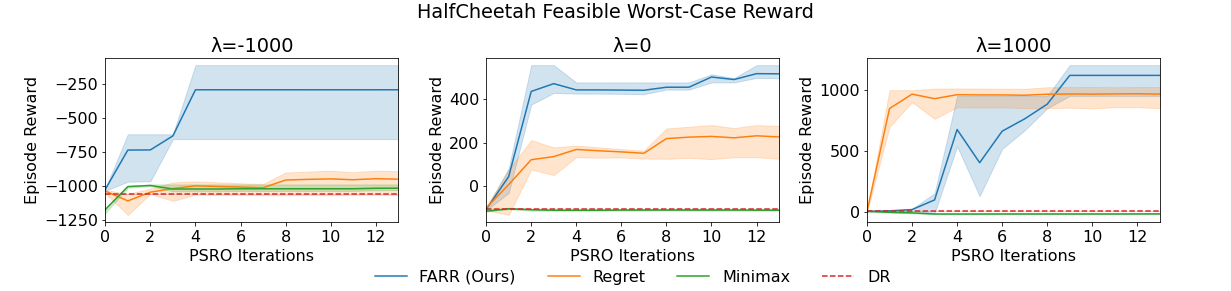}} 
    
    
    \subfigure[]{\includegraphics[width=0.95\textwidth]{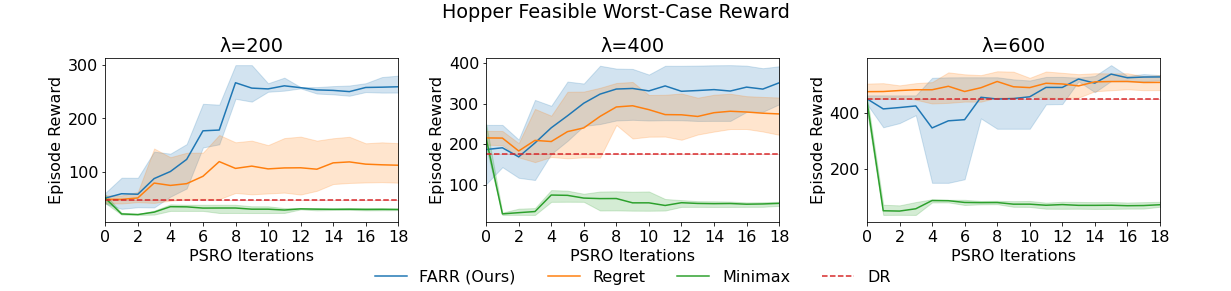}} 
    
    \subfigure[]{\includegraphics[width=0.95\textwidth]{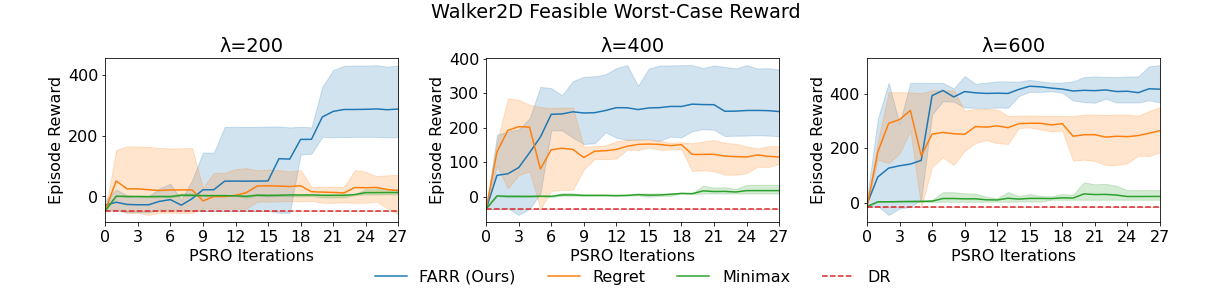}} 
    
    \caption{Worst-case MuJoCo HalfCheetah (a), Hopper (b), and Walker2D (c) average episode reward among task parameters in the feasible set $\mathcal{F}^\lambda$ as a function of PSRO iterations for FARR and other baselines with multiple values of $\lambda$.}
    \label{fig:halfcheetah}
\end{figure*}

    
    

\subsection{MuJoCo}

We compare FARR and other objectives on three MuJoCo control tasks, HalfCheetah, Walker2D, and Hopper using PPO \citep{schulman2017proximal} to train protagonist RL policies. In each of these environments, the adversary specifies parameters $\theta = (\alpha, \beta)$ where $\alpha \in (0, 10], \beta \in (0, 10]$ for a beta distribution $\mathbf{B}(\alpha, \beta)$ used to sample from and generate 1D horizontal perturbing forces every timestep that are applied to the torso of the protagonist's robot. The adversary has the ability to specify infeasible distributions of forces which make accruing reward in each task virtually impossible. We conduct experiments with each MuJoCo environment using three different feasibility threshold values for $\lambda$, representing three different assumptions regarding the difficulty of test-time conditions that we wish to prepare for.

For HalfCheetah, in Figure \ref{fig:halfcheetah} (a) we show worst-case protagonist reward among environment parameters in the feasible set $\mathcal{F}^\lambda$ as a function of PSRO iterations with $\lambda$ values $\{-1000, 0, 1000\}$. The same is shown for Hopper with $\lambda \in \{200, 400, 600\}$ in Figure \ref{fig:halfcheetah} (b), and for Walker2D with $\lambda \in \{200, 400, 600\}$ in Figure \ref{fig:halfcheetah} (c).

Across different values for $\lambda$ in each of these environments, we see that FARR is able to train an agent which maximizes worst-case reward under the ground-truth $\mathcal{F}^\lambda$ by penalizing the adversary to prevent it from providing infeasible variations. Domain randomization and regret provide training distributions unconditioned on $\lambda$ or any notion of $\mathcal{F}^\lambda$, which are only sometimes appropriate for robust performance as seen for $\lambda=600$ in Figure \ref{fig:halfcheetah} (b) with domain randomization and regret and for $\lambda=1000$ in Figure \ref{fig:halfcheetah} (a) with regret. Otherwise, domain randomization and regret result in agents exploitable to some configuration in $\mathcal{F}^\lambda$. Likewise, minimax consistently provides insurmountable conditions to the protagonist, resulting in failed learning and highlighting the need for methods like FARR to automatically limit adversary abilities in robust RL.




\section{Discussion and Future Work}
We present FARR, a novel robust RL problem formulation and two-player zero-sum game objective in which we consider an underspecified environment allowing infeasible conditions and we train a protagonist to be robust only to the tasks which are feasible. By solving for approximate Nash equilibrium under the FARR objective using PSRO, we demonstrate that this method can produce a robust agent even when the adversary is allowed to specify parameters which make sufficient performance at a task impossible. A limitation and avenue for future work is that our current method for optimizing FARR does not directly optimize the adversary, instead relying on random search and the PSRO restricted game solution to provide an optimal mixed strategy. In future work, if high-dimensional joint adversary best-responses with feasibility estimates can be sample-efficiently optimized, FARR can provide a prescribable solution to avoid the manual creation of complex rules to limit robust RL adversaries in higher-dimensional sim-to-real configuration spaces.

\bibliography{iclr2023_conference}
\bibliographystyle{iclr2023_conference}

\newpage
\appendix

\section{MuJoCo Feasible Sets}

 For MuJoCo experiments, in order to measure each objective's worst-case average episode reward  among feasible tasks, we evaluate on a discrete approximation of $\mathcal{F}^\lambda$. In these environments, $\Theta$ represents parameters of a beta distribution $\mathbf{B}(\alpha, \beta)$, $\alpha \in (0, 10], \beta \in (0, 10]$ sampled from each timestep to generate horizontal perturbing forces applied to the simulated robot. We consider a discretization of $\Theta$ with 11 different values in [0.01, 10] for both $\alpha$ and $\beta$. For each each combination $\theta = (\alpha, \beta)$, we train 7 seeds of a RL best-response $\mathbb{BR}(\theta)$ to completion using the same hyperparameters as the protagonist. The average final utility $U_p(\mathbb{BR}(\theta), \theta)$ across seeds is then used to calculate $\mathcal{F}^\lambda$ using equation (\ref{feasibility}). In Figure \ref{fig:feasible_sets}, for each environment, we show the 7-seed average $U_p(\mathbb{BR}(\theta), \theta)$ for every value of $\theta$ and the resulting feasible sets $\mathcal{F}^\lambda$ (shown in green) that we evaluate robustness to for each value of $\lambda$.



\begin{figure*}[!htb]
    \centering

    \subfigure[]{\includegraphics[scale=0.3]{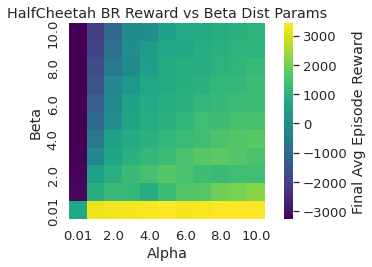}}
    \subfigure[]{\includegraphics[scale=0.32]{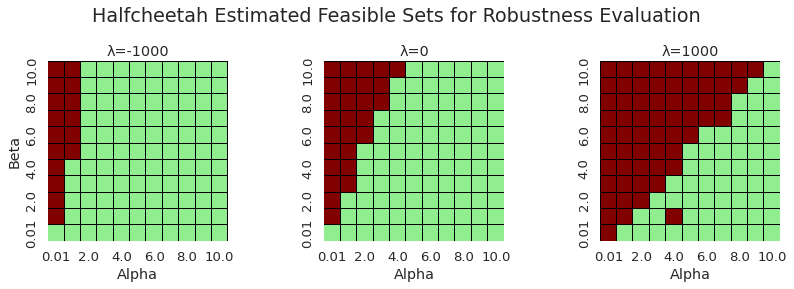}} 

    \subfigure[]{\includegraphics[scale=0.3]{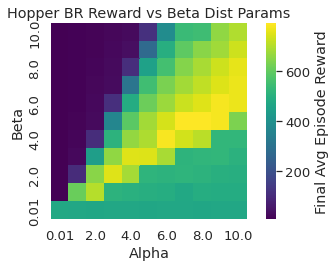}}
    \subfigure[]{\includegraphics[scale=0.32]{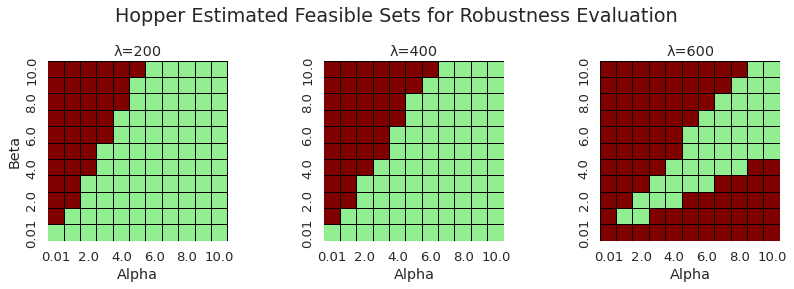}} 
    
    \subfigure[]{\includegraphics[scale=0.3]{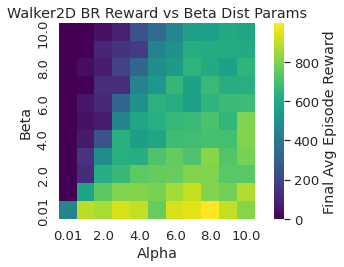}}
    \subfigure[]{\includegraphics[scale=0.32]{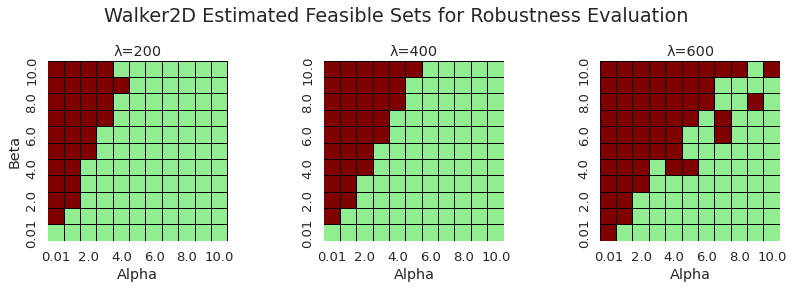}}

    \caption{(a,c,e) Estimated values for $U_p(\mathbb{BR}(\theta), \theta)$ across the two parameters $\alpha$ and $\beta$ that the adversary has control over. (b,d,f) The feasible sets $\mathcal{F}^\lambda$ used for evaluation marked in green for each value of $\lambda$.}
    \label{fig:feasible_sets}
\end{figure*}

\newpage

\section{MuJoCo Learned Adversary Strategies}

After running PSRO to completion, the output strategies are both a protagonist mixed strategy $\sigma_p$ and an adversary mixed strategy $\sigma_\theta$ that should jointly approximate a Nash equilibrium to each of the two-player zero-sum game objectives we optimize. In figures \ref{fig:halfcheetah_meta_dists}, \ref{fig:hopper_meta_dists}, and \ref{fig:walker2d_meta_dists}, we display the distribution of $\theta$ values induced by the final MuJoCo adversary mixed strategies $\sigma_\theta$ for the minimax, regret, and FARR objectives. For each $\lambda$ value considered, we overlay in shades of green the number of $\mathbb{BR}(\theta)$ seeds used in measuring $\mathcal{F}^\lambda$ that achieved a final average episode reward greater than or equal for $\lambda$. 

Across all environments, the minimax adversary consistently selects the most difficult $\theta$ values possible outside of any variations considered feasible with the $\lambda$ values tested. In contrast, FARR mixes between $\theta$ values both well inside of the feasible regions and at the edges where task variations are as challenging as possible while remaining feasible.

\begin{figure*}[!htb]
    \centering
    \includegraphics[width=0.95\textwidth]{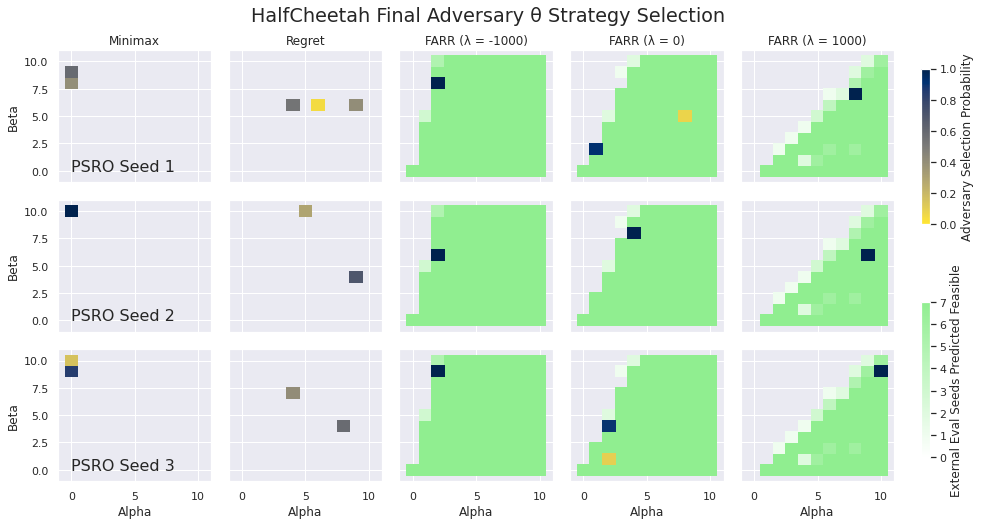}
    \caption{HalfCheetah $\theta$ distributions induced by the final adversary PSRO mixed strategy $\sigma_\theta$ for each objective.}
    \label{fig:halfcheetah_meta_dists}
\end{figure*}

\begin{figure*}[!htb]
    \centering
    \includegraphics[width=0.95\textwidth]{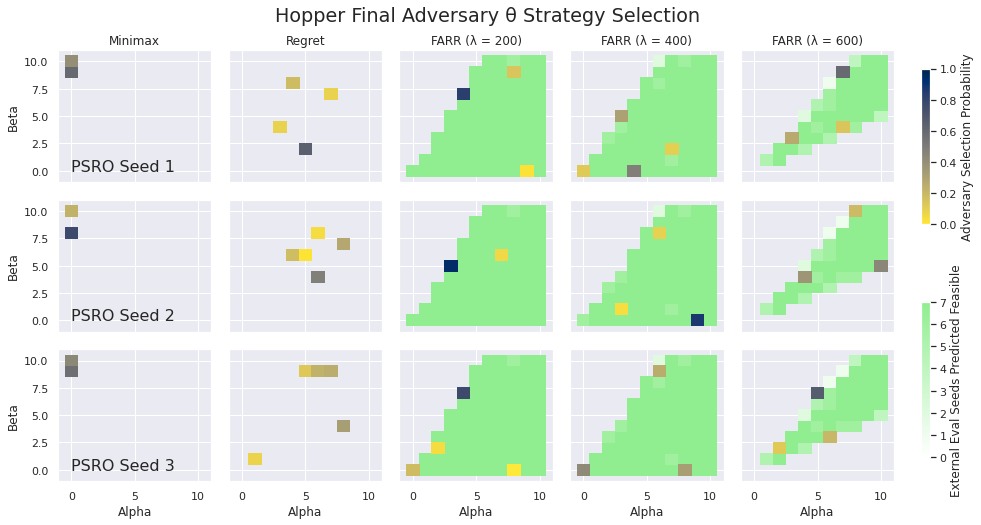}
    \caption{Hopper $\theta$ distributions induced by the final adversary PSRO mixed strategy $\sigma_\theta$ for each objective.}
    \label{fig:hopper_meta_dists}
\end{figure*}

\begin{figure*}[!htb]
    \centering
    \includegraphics[width=0.95\textwidth]{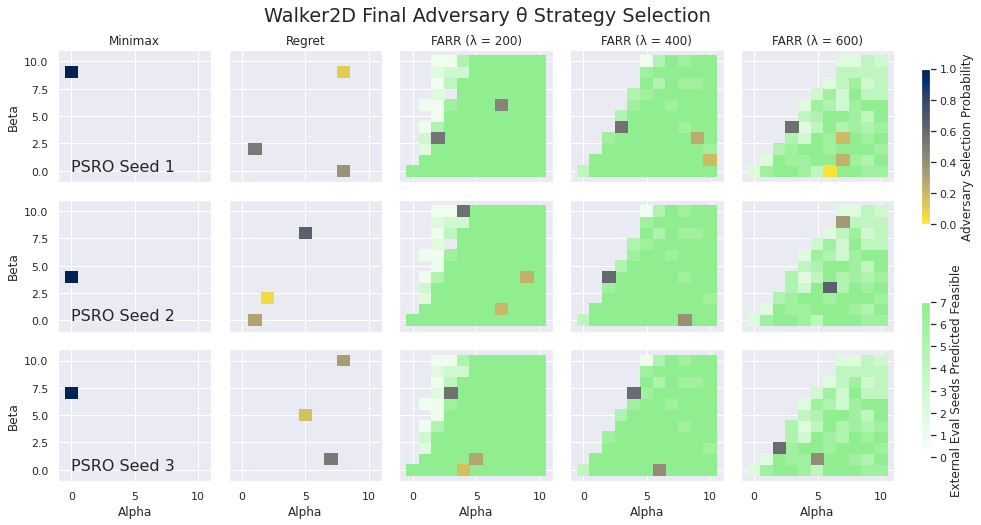}
    \caption{Walker2D $\theta$ distributions induced by the final adversary PSRO mixed strategy $\sigma_\theta$ for each objective.}
    \label{fig:walker2d_meta_dists}
\end{figure*}

\newpage
~\newpage

\section {Properties of Nash Equilibria for the FARR Transformed Game}
In this section we show that solving for Nash equilibria in the FARR transformed game will give the same result as solving for NE in a regular minimax robust RL game with the adversary strategy space already limited to only feasible strategies. The benefit of solving the FARR game is that the set of feasible adversary strategies does not need to be known a priori.

Define the set of $\lambda$-infeasible adversary strategies as $\mathcal{I}^{\lambda} = \Theta \setminus \mathcal{F}^{\lambda}$. Define the set of all possible protagonist strategies $\pi_p$ as $\Pi_p^*$. For both protagonist utility functions $U_p^\lambda$ and $U_p$, the adversary's utility function $U_a^\lambda$ and $U_a$ is the negative of the protagonist's, $U_a^\lambda(\pi_p, \theta) = -U_p^\lambda(\pi_p, \theta)$ and $U_a(\pi_p, \theta) = -U_p(\pi_p, \theta)$ for any $\pi_p$ and $\theta$.

\begin{theorem}
For sufficiently large C, a Nash equilibrium joint strategy $\sigma^*_\text{FARR}$ of a FARR transformed game $G_\text{FARR}$ with utility function $U_p^\lambda$, protagonist strategies $\Pi_p^*$ and all adversary strategies $\Theta$ is also a Nash equilibrium of a reduced game $G_\text{reduced}$ with utility function $U_p$, protagonist strategies $\Pi_p^*$, and only $\lambda$-feasible adversary strategies $\mathcal{F}^\lambda$.
\end{theorem}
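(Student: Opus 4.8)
\section*{Proof Proposal}

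The plan is to exploit the fact that, for large $C$, the penalty makes infeasible adversary strategies strictly unattractive to the minimizing adversary, so that any equilibrium of $G_\text{FARR}$ already places the adversary's entire probability mass inside $\mathcal{F}^\lambda$; on that support the two utility functions agree pointwise, and the equilibrium inequalities then transfer directly. Concretely, I would let $\sigma^*_\text{FARR} = (\sigma_p^*, \sigma_\theta^*)$ be a Nash equilibrium of $G_\text{FARR}$ and fix $C$ strictly larger than $\max_{\pi_p \in \Pi_p^*,\, \theta \in \mathcal{F}^\lambda} U_p(\pi_p, \theta)$, which is finite since discounted returns are bounded, working under the standing assumption that $\mathcal{F}^\lambda \neq \emptyset$.

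First I would show that $\texttt{supp}(\sigma_\theta^*) \subseteq \mathcal{F}^\lambda$. By equation (\ref{farr_utility}), every infeasible $\theta \in \mathcal{I}^\lambda$ yields $U_p^\lambda(\pi_p, \theta) = C$ for all $\pi_p$, hence adversary utility $U_a^\lambda(\pi_p, \theta) = -C$ independent of the protagonist. Because $C$ exceeds the largest protagonist return attainable on any feasible parameter, every pure strategy in the support of a utility-minimizing adversary must attain the minimal expected protagonist utility, and an infeasible $\theta$ strictly fails this: shifting its mass onto any feasible minimizer of $U_p(\sigma_p^*, \cdot)$ is a profitable deviation. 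If $\sigma_\theta^*$ placed positive weight on $\mathcal{I}^\lambda$ this would contradict the equilibrium property, so the adversary's mass lies entirely in $\mathcal{F}^\lambda$ and $\sigma_\theta^*$ is a well-defined strategy in $G_\text{reduced}$.

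I would then observe that, since $\texttt{supp}(\sigma_\theta^*) \subseteq \mathcal{F}^\lambda$ and $U_p^\lambda(\pi_p, \theta) = U_p(\pi_p, \theta)$ for every feasible $\theta$, the two objectives coincide along $\sigma_\theta^*$: for every $\sigma_p$ we have $U_p^\lambda(\sigma_p, \sigma_\theta^*) = U_p(\sigma_p, \sigma_\theta^*)$. The protagonist's equilibrium condition in $G_\text{FARR}$ then immediately gives that $\sigma_p^*$ maximizes $U_p(\cdot, \sigma_\theta^*)$ over the unchanged set $\Pi_p^*$, which is exactly the protagonist's equilibrium condition in $G_\text{reduced}$. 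For the adversary, the support argument shows $\min_{\sigma_\theta \in \Delta(\Theta)} U_p^\lambda(\sigma_p^*, \sigma_\theta) = \min_{\sigma_\theta \in \Delta(\mathcal{F}^\lambda)} U_p(\sigma_p^*, \sigma_\theta)$, and this common minimum is attained by $\sigma_\theta^*$; hence $\sigma_\theta^*$ is a best response against $\sigma_p^*$ among feasible strategies. Both equilibrium conditions of $G_\text{reduced}$ thus hold.

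The hard part will be making the ``sufficiently large $C$'' threshold precise and ensuring the support argument is airtight: I must verify that the penalty dominates not merely pointwise but against the adversary's best feasible response, which requires a uniform bound on $U_p$ over $\Pi_p^* \times \mathcal{F}^\lambda$, and I must flag the degenerate case $\mathcal{F}^\lambda = \emptyset$, where $G_\text{reduced}$ is undefined and the statement is vacuous. Once the utility functions are shown to agree on the equilibrium support, the remaining transfer of the maximization and minimization inequalities is routine.
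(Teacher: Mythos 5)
Your proposal is correct, and it reaches the same conclusion through the same central observation as the paper --- for large $C$ an infeasible $\theta$ gives the adversary utility $-C$, which is strictly worse than any feasible alternative --- but it finishes by a different mechanism. The paper's proof sets $C > \max_{\pi_p,\theta} U_p(\pi_p,\theta)$, concludes that every $\theta' \in \mathcal{I}^\lambda$ is \emph{strictly dominated} by any feasible $\theta$, and then invokes two standard game-theoretic facts: strictly dominated strategies lie outside the support of any Nash equilibrium, and a single round of iterated elimination of strictly dominated strategies reduces $G_\text{FARR}$ to $G_\text{reduced}$ while preserving the equilibrium $\sigma^*_\text{FARR}$. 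You instead verify the equilibrium conditions directly: a profitable-deviation argument shows $\texttt{supp}(\sigma_\theta^*) \subseteq \mathcal{F}^\lambda$, agreement of $U_p^\lambda$ and $U_p$ on feasible support transfers the protagonist's best-response inequality verbatim, and the containment $\Delta(\mathcal{F}^\lambda) \subseteq \Delta(\Theta)$ together with $U_p^\lambda = U_p$ on the smaller set transfers the adversary's. Your route buys three things: it is self-contained (no appeal to dominance/IESDS theorems, whose usual statements are for finite games, whereas $\Theta$ here may be a continuum, as in the MuJoCo experiments); it needs only the weaker threshold $C > \max_{\pi_p,\,\theta\in\mathcal{F}^\lambda} U_p(\pi_p,\theta)$ rather than a bound over all of $\Theta$; and it makes explicit the caveats the paper only gestures at --- the $\mathcal{F}^\lambda = \emptyset$ degenerate case, and the fact that your deviation argument does not require a feasible minimizer of $U_p(\sigma_p^*,\cdot)$ to be attained, since any feasible $\theta$ already beats the penalty. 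The paper's route buys brevity and situates the result in familiar elimination machinery, which supports its closing remark that the penalty formulation is algorithm-agnostic. One cosmetic point: your intermediate claim that every pure strategy in the adversary's equilibrium support must attain the minimal protagonist utility is the standard support characterization and is cleanest for finitely supported $\sigma_\theta^*$; for general mixtures you should phrase it measure-theoretically ($\sigma_\theta^*$-almost every $\theta$ is a best response), but the mass-shifting deviation you actually execute is valid without it, so nothing breaks.
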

\begin{proof}

Let C take a sufficiently large value greater than any utility achievable by the protagonist $C > \max_{\pi_p, \theta}{U_p(\pi_p,\theta)}$. Since $U_a^\lambda(\pi_p,\theta') = -C$ when $\theta' \in \mathcal{I}^\lambda$, then for any $\theta' \in \mathcal{I}^\lambda$, any $\theta \in \mathcal{F}^\lambda$, and any $\pi_p \in \Pi_p^*$: $U_a^\lambda(\pi_p, \theta') < -U_p(\pi_p, \theta) = U_a^\lambda(\pi_p, \theta)$. It follows, as long as $\mathcal{F}^\lambda$ is nonempty, that for all $\theta' \in \mathcal{I}^\lambda$ there exists an adversary strategy $\theta \in \mathcal{F}^\lambda$ which achieves higher adversary utility against every $\pi_p \in \Pi_p^*$, thus all $\lambda$-infeasible adversary strategies $\theta' \in \mathcal{I}^\lambda$ are strictly dominated in the FARR transformed game.

If all $\theta' \in \mathcal{I}^\lambda$ are strictly dominated then they are not in the support of any Nash equilibrium for $G_\text{FARR}$. Furthermore, it is possible to remove strategies $\theta' \in \mathcal{I}^\lambda$ though iterated elimination of strictly dominated strategies (IESDS) to reduce $G_\text{FARR}$ to $G_\text{reduced}$ since the adversary strategy set for $G_\text{reduced}$ is $\Theta \setminus \mathcal{I}^{\lambda} = \mathcal{F}^\lambda$ and for all $\theta \in \mathcal{F}^\lambda$ and $\pi_p \in \Pi_p^*$: $U_p^\lambda(\pi_p, \theta) = U_p(\pi_p, \theta)$. If $G_\text{reduced}$ is an outcome of IESDS from $G_\text{FARR}$, then if $\sigma^*_\text{FARR}$ is a NE of $G_\text{FARR}$, it is also an NE of $G_\text{reduced}$.


\end{proof}
By employing a penalty $C$ rather than directly pruning infeasible strategies from the PSRO restricted game, the FARR objective can be defined without consideration to the mechanics of any specific algorithm like PSRO. The FARR objective can potentially be optimized with two-player zero-sum game methods other than PSRO as well, though exploring the use of more optimization methods is left to future work.

\section{Selecting Values for $\lambda$}

FARR is most applicable when the appropriate value for $\lambda$ can be derived from problem requirements. For instance, $\lambda$ would ideally be set to the lowest average return that an optimal agent would receive across task variations in deployment. This value could come from the environment definition, where doable tasks provide a minimum level of return if accomplished. Alternatively, $\lambda$ could be chosen by anticipating the maximum difficulty of task variations that would be seen at test-time or on which robust performance is important to the practitioner.

In the case where an appropriate value of $\lambda$ is completely unknown and cannot be derived from problem requirements, in a low-dimensional task variation space, manually tuning the adversary's capabilities without FARR may be appropriate. In a complex, high-dimensional task variation space, searching for a useful $\lambda$ may be easier than a direct search over the space of adversary legal strategy sets because $\lambda$ presents a single variable to tune, rather than a large number of legal parameter ranges or complex conditional constraints between adversary-specified parameters that may need to be defined. In this work, we consider the case where $\lambda$ can be derived from problem requirements.

\section{PSRO Comparison with Self-Play}

In Lava World, for each of the two-player zero-sum game objectives, we compare PSRO to self-play in which the protagonist, adversary, and evaluator $\pi^\theta_e$ continuously train together. For all self-play agents, we train with PPO to enable stochastic policies like PSRO is able to output. Regret self-play matches the original PAIRED algorithm from \citet{dennis2020emergent}.

\begin{figure*}[h]
    \centering
    \includegraphics[width=0.5\textwidth]{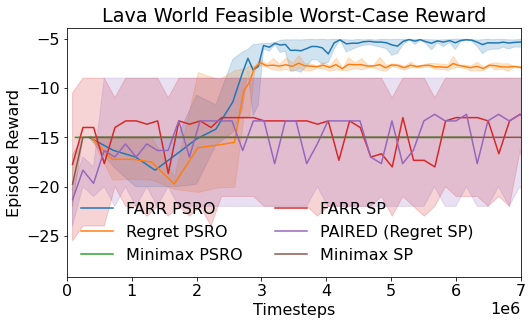}
    \caption{Worst-case average episode reward among goals in $\mathcal{F}^\lambda$ vs timesteps collected for each two-player zero-sum game objective optimized with both PSRO and PPO Self-Play.}
    \label{fig:psro_vs_self_play}
\end{figure*}

Although self-play may potentially yield competitive performance in some scenarios, unlike PSRO, it lacks any guarantees of converging to an approximate Nash equilibrium in two-player zero-sum partially-observable Markov or extensive-form games. Seen in Figure \ref{fig:psro_vs_self_play}, we see that self-play for both FARR and PAIRED fails to converge, reaching a maximum feasible-space worst-case average reward of -9 as agent policies cycle and learn to represent nearly deterministic strategies during most points in training. The NE for Lava World requires a mixed-strategy in which the adversary samples a high-entropy (non-uniform) distribution of hidden goals. The degenerate solution to Minimax is reached by both algorithms.

\section{Environment Details}

\subsection{Lava World}

In the Lava World grid environment, the protagonist uses discrete actions to move in each of the 4 cardinal directions. For observations, the protagonist receives a one-hot encoding of its current location, and the protagonist does not observe the goal location. The adversary strategy space $\Theta$ is to define the hidden goal location and consists of every grid cell location in the environment's 5x5 grid with the exception of the protagonist's fixed starting location. 

The protagonist receives a reward of -1 in each timestep that it does not reach the hidden goal location suggested by the adversary and -15 if it moves into a lava cell, even if the lava cell was a goal. An episode ends after either 20 timesteps elapse or the goal is reached.

\subsection{MuJoCo Environments}

The MuJoCo environments use the Mujoco physics engine \citep{todorov2012mujoco} and are modified versions of the perturbed robotic control environments originally presented in \citet{pinto2017robust}.

In each environment variation (HalfCheetah, Hopper, Walker2D), a max episode duration of 200 timesteps is imposed, and the proportion of time remaining in the range [0, 1] is appended to each task's original observation. We use the continuous action space variants of each task.

The adversary strategy space $\Theta$ consists of continuous $\alpha$ and $\beta$ parameters in the range (0, 10] for a beta distribution $\mathbf{B}(\alpha, \beta)$ used to generate horizontal perturbing forces sampled and applied to the robot's torso every timestep. Each timestep, a new horizontal force $F \in [-F_\text{max}, F_\text{max}]$, $F = X(2F_\text{max}) - F_\text{max}$ is generated where $X\sim{\mathbf{B}(\alpha, \beta)}$ and $F_\text{max} = 100$. 


When discretizing values of $\theta = (\alpha, \beta)$ for evaluation purposes to measure a policy's performance across values in $\Theta$ or $\mathcal{F}^\lambda$, we use $\alpha, \beta \in \{0.01, 1.0, 2.0, 3.0, 4.0, 5.0, 6.0, 7.0, 8.0, 9.0, 10.0\}$.

\section{Training Details}
Protagonist RL training details are provided below for each environment. Policies used to estimate $\mathbb{BR}(\theta)$ for a given $\theta$ use the same training procedure and parameters as the protagonist. Like \citet{dennis2020emergent}, we train protagonist policies on the easier-to-learn unmodified environment reward $U_p$ rather than our two-player game objective $U_p^\lambda$ because the only component of the game utility that the protagonist can affect is $U_p$. A protagonist best-response that maximizes $U_p$ also maximizes $U_p^\lambda$. Critically, we still calculate $U^\lambda_p$ in the PSRO empirical payoff matrix $U^\Pi_\lambda$ and use $U^\Pi_\lambda$ to calculate the meta-game NE strategy $\sigma = (\sigma_p, \sigma_\theta)$.

\subsection{Lava World}
We train Lava World protagonist RL policies using DDQN \cite{van2016deep}.
All Lava World protagonist RL policies are stopped training after either 150,000 timesteps are collected or once performance plateaus (average episode return doesn't improve by 0.5 over 20,000 timesteps and a minimum of 80,000 timesteps is collected). Lava World DDQN hyperparameters are presented below. Our RL code was built using the RLlib framework \cite{liang2018rllib}, and any hyperparameters not specified are the version 1.0.1 defaults. We use an infeasibility penalty of $C = 50$.

\begin{table}[H]
\centering
\begin{tabular}{ll}
algorithm & DDQN \cite{van2016deep} \\
circular replay buffer size & 50,000 \\
prioritized experience replay & No \\
total rollout experience gathered each iter & 8 steps \\
learning rate & 0.007 \\
batch size & 1024 \\
optimizer & Adam \citep{kingma2014adam} \\
TD-error loss type & MSE \\
target network update frequency & every 4,000 steps \\
MLP layer sizes & [256, 256] \\
activation function & tanh \\
discount factor $\gamma$ & 1.0 \\
exploration $\epsilon$ & Linearly annealed from 0.5 to 0.01 \\
& over 20,000 timesteps \\
\end{tabular}
\caption{Lava World protagonist DDQN hyperparameters}
\label{table:ddqn-leduc}
\end{table}

\begin{table}[H]
\centering
\begin{tabular}{ll}
algorithm & PPO \cite{schulman2017proximal} \\
GAE $\lambda$ & 0.9 \\
entropy coeff & 0.007 \\
clip param & 0.276 \\
KL target & 3e-4 \\
KL coeff & 0.0016 \\
learning rate & 5e-4 \\
train batch size & 8192 \\
SGD minibatch size & 64 \\
num SGD epochs on each train batch & 40 \\
shared policy and value networks & No \\
value function clip param & 10 \\
MLP layer sizes & [256, 256] \\
activation function & Tanh \\
discount factor $\gamma$ & 1.0 \\
\end{tabular}
\label{Tab:ppo-params}
\caption{Lava World PPO self-play protagonist hyperparameters}
\end{table}

\begin{table}[H]
\centering
\begin{tabular}{ll}
algorithm & PPO \cite{schulman2017proximal} \\
GAE $\lambda$ & 0.95 \\
entropy coeff & 0.006 \\
clip param & 0.292 \\
KL target & 0.092 \\
KL coeff & 0.168 \\
learning rate & 3e-4 \\
train batch size & 8192 \\
SGD minibatch size & 64 \\
num SGD epochs on each train batch & 30 \\
shared policy and value networks & No \\
value function clip param & 100 \\
MLP layer sizes & [256, 256] \\
activation function & Tanh \\
discount factor $\gamma$ & 1.0 \\
\end{tabular}
\label{Tab:ppo-params}
\caption{Lava World PPO self-play adversary hyperparameters}
\end{table}

In PSRO, to calculate the payoff matrix $U^\Pi_\lambda$, we estimate $U_p(\pi_p, \theta)$ for each pairing of player policies $\pi_p \in \Pi_p$ and $\theta \in \Pi_\theta$ using a single rollout because both Lava World environment dynamics and evaluation DDQN policies are deterministic. The normal-form meta-game Nash Equilibrium over $U^\Pi_\lambda$ is calculated using 2000 iterations of Fictitious Play \citep{fp}. Calculating the meta-game NE typically takes a second or less of wall-time compute.

During PSRO evaluation, we measure the performance of the protagonist meta-game mixed strategy $\sigma_p$, in which a new protagonist policy $\pi_p \sim{\sigma_p}$ is sampled at the begining of each episode. 

In self-play, the adversary is trained as a single-step agent via PPO. For simplicity, we keep the same network architecture for all self-play agents, and the adversary observes a constant vector of zeros.

\newpage

\subsection{MuJoCo}

We train MuJoCo environment protagonist RL policies using PPO \citep{schulman2017proximal}. Each PPO model consists of an MLP followed by an LSTM with shared weights between the policy and value function, branching into final output layers after the LSTM. PPO hyperparameters for each MuJoCo environment are presented below. Any hyperparameters not specified are the RLlib version 1.0.1 defaults. We use an infeasibility penalty of $C = 1e6$.

\begin{table}[H]
\centering
\begin{tabular}{ll}
algorithm & PPO \cite{schulman2017proximal} \\
GAE $\lambda$ & 0.9 \\
entropy coeff & 0.01 \\
clip param & 0.001 \\
KL target & 0.004 \\
KL coeff & 0.522 \\
learning rate & 5e-4 \\
train batch size & 4096 \\
SGD minibatch size & 64 \\
num SGD epochs on each train batch & 5 \\
shared policy and value networks & Yes \\
value function loss coeff & 0.001 \\
value function clip param & 100 \\
continuous action range & [-1.0, 1.0] for each dim \\
MLP layer sizes & [32] \\
activation function & Tanh \\
LSTM cell size & 32 \\
LSTM max sequence length & 20 \\
discount factor $\gamma$ & 0.99 \\
RL policy training stopping condition & 7e6 timesteps \\
\end{tabular}
\label{Tab:ppo-params}
\caption{HalfCheetah PPO hyperparameters}
\end{table}

\begin{table}[H]
\centering
\begin{tabular}{ll}
algorithm & PPO \cite{schulman2017proximal} \\
GAE $\lambda$ & 0.9 \\
entropy coeff & 0.001 \\
clip param & 0.002 \\
KL target & 0.036 \\
KL coeff & 0.013 \\
learning rate & 7e-4 \\
train batch size & 4096 \\
SGD minibatch size & 32 \\
num SGD epochs on each train batch & 5 \\
shared policy and value networks & Yes \\
value function loss coeff & 0.001 \\
value function clip param & 10 \\
continuous action range & [-1.0, 1.0] for each dim \\
MLP layer sizes & [64, 64] \\
activation function & Tanh \\
LSTM cell size & 32 \\
LSTM max sequence length & 20 \\
discount factor $\gamma$ & 0.99 \\
RL policy training stopping condition & 6e6 timesteps \\
\end{tabular}
\label{Tab:ppo-params}
\caption{Hopper PPO hyperparameters}
\end{table}

\begin{table}[H]
\centering
\begin{tabular}{ll}
algorithm & PPO \cite{schulman2017proximal} \\
GAE $\lambda$ & 0.95 \\
entropy coeff & 0.0 \\
clip param & 0.014 \\
KL target & 0.005 \\
KL coeff & 0.007 \\
learning rate & 9e-4 \\
train batch size & 1024 \\
SGD minibatch size & 64 \\
num SGD epochs on each train batch & 10 \\
shared policy and value networks & Yes \\
value function loss coeff & 1e-4 \\
value function clip param & 1000 \\
continuous action range & [-1.0, 1.0] for each dim \\
MLP layer sizes & [32] \\
activation function & Tanh \\
LSTM cell size & 32 \\
LSTM max sequence length & 20 \\
discount factor $\gamma$ & 1.0 \\
RL policy training stopping condition & 6e6 timesteps \\
\end{tabular}
\label{Tab:ppo-params}
\caption{Walker2D PPO hyperparameters}
\end{table}

In PSRO, to calculate the payoff matrix $U^\Pi_\lambda$, we estimate $U_p(\pi_p, \theta)$ for each pairing of player policies $\pi_p \in \Pi_p$ and $\theta \in \Pi_\theta$ using 100 rollouts as perturbed MuJoCo environment transition dynamics are stochastic. The normal-form meta-game Nash Equilibrium over $U^\Pi_\lambda$ is also calculated using 2000 iterations of Fictitious Play.

Although PSRO convergence guarantees are not provided for continuous-action environments, in \citet{mcaleer2021xdo}, \citet{mcaleer2022anytime}, and our own experiments, PSRO reliably produces meta-game NE mixed strategies that are empirically difficult for an opponent to exploit. 

\section{Computational Costs}
Experiments were performed on a local computer with 128 logical CPU-cores, 4 RTX 3090 GPUs, and 512GB of RAM. Due to small network sizes and comparably high overhead of CPU-based environments, logging, and other tasks, most experiments were performed without GPU
acceleration. All individual training runs for a given player against a fixed opponent took 5 CPU-cores each. Lava world experiments individually ran for roughly 8 to 24 hours each, while MuJoCo experiments individually ran for roughly 72 hours each.


\section{Code}
A GitHub link for our experiment code will be provided under the MIT license in an updated version of this work.

Our code is written on top of the RLlib framework \citep{liang2018rllib} and uses environments built using the MuJoCo physics engine \citep{todorov2012mujoco}, both of which are open-source and available under the Apache-2.0 Licence.


\end{document}